\newtheorem{theorem}{Theorem}
\newtheorem{cortheorem}{Corollary}[theorem]
\newtheorem{proposition}[theorem]{Proposition}
\newtheorem{defn}{Definition}
\newcommand{\sw}[1]{\widehat{#1}}    
\newcommand{\swx}[2]{\sw{\mathbf{w}}^{#1}_{#2}(\mathbf{x})}
\newcommand{\sbx}[1]{\sw{b}_{#1}(\mathbf{x})}
\icmltitlerunning{Switched linear projections}
\begin{document}

\twocolumn[
\icmltitle{Switched linear projections for neural network interpretability}



\icmlsetsymbol{equal}{*}

\begin{icmlauthorlist}
\icmlauthor{Lech Szymanski}{}
\icmlauthor{Brendan McCane}{}
\icmlauthor{Craig Atkinson}{}
\end{icmlauthorlist}


\icmlcorrespondingauthor{Lech Szymanski}{lechszym@cs.otago.ac.nz}

\icmlkeywords{Artificial Neural Networks, Interpretability, Switched Linear Projections, Input Component Decomposition, Singular Pattern Analysis}

\vskip 0.3in
]




\begin{abstract}
We introduce \textit{switched linear projections} for expressing the activity of a neuron in a deep neural network in terms of a single linear projection in the input space.  The method works by isolating the active subnetwork, a series of linear transformations, that determine the entire computation of the network for a given input instance.  With these projections we can decompose activity in any hidden layer into patterns detected in a given input instance.  We also propose that in ReLU networks it is instructive and meaningful to examine patterns that deactivate the neurons in a hidden layer, something that is implicitly ignored by the existing interpretability methods tracking solely the active aspect of the network's computation.
\end{abstract}

\section{Introduction}


It is notoriously hard to interpret how deep networks accomplish the tasks for which they are trained.  At the same time, due to the pervasiveness of deep learning in numerous aspects of computing, it is increasingly important to gain understanding of how they work.  There are risks associated with the possibility that a neural network might not be ``looking" at the ``right" patterns \cite{Nguyen.etal:2015, Geirhos.etal:2018}, as well as opportunities to learn from a network capable of \textit{better than human} performance \cite{Sadler.etal:2019}.  Hence, there is ongoing effort to improve the interpretation and interpretability of the internal representation of neural networks.   


What makes this interpretation of the inside of a neural network hard is the high dimensionality and the distributed nature of its internal computation.  Aside from the first hidden layer, neurons operate in an abstract high-dimensional space.  If that was not hard enough, the analysis of individual components of the network (such as activity of individual neurons) is rarely instructive, since it is the intricate relationships and interplay of those components that contain the ``secret sauce".  The two broad approaches to dealing with this complexity is to either use simpler interpretable models to approximate what a neural network does, or to trace back the elements of the computation into the input space in order to make the internal dynamics relatable to the input.  In the latter approach we are typically interested in neurons' \textit{sensitivity} -- how the changes in network input affect their output, and \textit{decomposition} -- how different components of the input contribute to the output.  


In this paper we propose a straightforward and elegant method for expressing the computation of an arbitrary neuron's activity to a \textit{single linear projection} in the input space.  This projection consists of a \textit{switched weight vector} and a \textit{switched bias} that easily lend themselves to sensitivity analysis (analogous to gradient-based sensitivity) and decomposition of the internal computation.  We also introduce a new approach to interpretability analysis that disentangles the distributed nature of a hidden layer's representation by decomposition into independent patterns from the input space.  We refer to this method as \textit{singular pattern analysis} (SPA), because it is based on singular value decomposition (SVD) of the matrix of neural activity expressed in terms of switched linear projections.  We also demonstrate that in ReLU networks SPA can be used to separate the representation further into active and inactive parts of the neural network.   

\section{Related work}

Previous work on deep learning interpretability is extensive with a wide variety of methods and approaches -- \cite{Simonyan:etal:2014, Zeiler.etal:2014, Bach.etal:2015, Mahendran.etal:2015, Montavon.etal:2017, Sundararajan.etal:2017, Zhou.etal:2019} being just a selection of the most prominent efforts in this area.  Our work on the single linear projection follows the approach akin to \citet{Lee.etal:2008} and \citet{Erhan.etal2009}, where the objective is to interpret the computation performed by an arbitrary neuron for a particular input vector as a projection in the input space.  However, whereas these previous attempts were based on Deep Belief Nets \cite{Hinton.etal2006} and required an approximation of the said projection, our method is a forward computation that gives the neuron's activity in terms of a linear projection in the input space.  It works for any neural network, including convolutional ones, as long as all hidden neurons use an activation function that is continuous and has a derivative almost everywhere.

Existing methods for interpretability of deep learning representation take it for granted that a given neuron is sensitive to a \textit{feature} and its activity conveys presence of that particular feature in the input.  This approach ignores the fact that the computation inside a neural network is distributed and shared by many neurons (if it was not, dropout would likely not work).  We, on the other hand, take it that neurons in a hidden layer collectively are sensitive to some set of features, but mixed in different combinations for each individual neuron.  We treat interpretability of the internal computation as a blind source separation problem, where the objective is to find independent patterns and the mixing matrix that gives the activity of a layer in a neural network.   

We also take the view that too much \textit{interpretation} in interpretability introduces the risk of showing us what we expect to see and \textit{not} what the network is actually focussing on. For instance, in Deep Taylor Decomposition \cite{Montavon.etal:2017} choices of different root-points for the decomposition of the relevance function lead to different rules for Layerwise Relevance Propagation (LRP)\cite{Bach.etal:2015}, which  can lead to different interpretations of what is important in the input.  The LRP-$\alpha_1\beta_0$ rule, for example, emphasises the computation over the positive weights in the network while discounting the relevance of the information passing through the negative weights.  This rule is justified by assumptions about desired properties of the explanation, but this comes with a risk of confirmation bias.  In SPA there is no interpretation, just descrambling of activity into independent patterns.

\section{Switched linear projections (SLP)}

\tikzset{%
  every neuron/.style={
    circle,
    draw,
    minimum size=0.8cm,
    fill=gray,
  },
  every output/.style={
    circle,
    draw,
    minimum size=0.8cm,
    fill=white,
    dashed
  },
  every inactive/.style={
    circle,
    draw,
    minimum size=0.8cm,
    fill=white
  },
  every input/.style={
    draw,
    shape=circle,
    scale=0.5,
    fill=black
  },
  every missing/.style={
    draw=none, 
    scale=4,
    text height=0.333cm,
    execute at begin node=\color{black}$\dot$
  },
}

\begin{figure*}
\centering
\begin{subfigure}[b]{0.3\textwidth}
    \centering
    \begin{tikzpicture}[scale=0.7,x=1.5cm, y=1.5cm, >=stealth]
      \node[every input] (x11) at (0,2.5) {};
      \node[every input] (x12) at (0,1.5) {};

      \node[every neuron] (h11) at (1.2,3) {\includegraphics[width=0.45cm]{relu}} ;
      \node[every inactive] (h12) at (1.2,2) {
          \includegraphics[width=0.45cm]{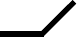}
      };
      \node[every neuron] (h13) at (1.2,1) {\includegraphics[width=0.45cm]{relu.pdf}};

      \node[every inactive] (h21) at (2.6,3) {\includegraphics[width=0.45cm]{relu.pdf}};
      \node[every neuron] (h22) at (2.6,2) {\includegraphics[width=0.45cm]{relu.pdf}};
      \node[every neuron] (h23) at (2.6,1) {\includegraphics[width=0.45cm]{relu.pdf}};

      \node[every output] (h31) at (3.7,2) {$\sum$};

      \draw  [->] (x11) edge (h11)
         (x11) edge (h12)
         (x11) edge (h13)

         (x12) edge (h11)
         (x12) edge (h12)
         (x12) edge (h13)

         (h11) edge (h21)
         (h11) edge (h22)
         (h11) edge (h23)

         (h12) edge (h21)
         (h12) edge (h22)
         (h12) edge (h23)

         (h13) edge (h21)
         (h13) edge (h22)
         (h13) edge (h23)

         (h21) edge (h31)
         (h22) edge (h31)
         (h23) edge (h31)

         ;

      \node[above=0.05cm of x11] {$x_{1}$};

      \node[above=0.05cm of x12]{$x_{2}$};

    \end{tikzpicture}
    \caption{}\label{fig:swa}
  \end{subfigure}
  \hfill
\begin{subfigure}[b]{0.3\textwidth}
    \centering
    \begin{tikzpicture}[scale=0.7,x=1.5cm, y=1.5cm, >=stealth]
      \node[every input] (x11) at (0,2.5) {};
      \node[every input] (x12) at (0,1.5) {};

      \node[every neuron] (h11) at (1.2,3) {\includegraphics[width=0.3cm]{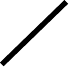}} ;
      \node[every inactive] (h12) at (1.2,2) {\includegraphics[width=0.45cm]{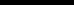}};
      \node[every neuron] (h13) at (1.2,1) {\includegraphics[width=0.3cm]{lin.pdf}};

      \node[every inactive] (h21) at (2.6,3) {\includegraphics[width=0.45cm]{zero.pdf}};
      \node[every neuron] (h22) at (2.6,2) {\includegraphics[width=0.3cm]{lin.pdf}};
      \node[every neuron] (h23) at (2.6,1) {\includegraphics[width=0.3cm]{lin.pdf}};

      \node[every output] (h31) at (3.7,2) {$\sum$};

      \draw  [->] (x11) edge (h11)
         (x11) edge (h12)
         (x11) edge (h13)

         (x12) edge (h11)
         (x12) edge (h12)
         (x12) edge (h13)

         (h11) edge (h21)
         (h11) edge (h22)
         (h11) edge (h23)

         (h12) edge (h21)
         (h12) edge (h22)
         (h12) edge (h23)

         (h13) edge (h21)
         (h13) edge (h22)
         (h13) edge (h23)

         (h21) edge (h31)
         (h22) edge (h31)
         (h23) edge (h31)

         ;

      \node[above=0.05cm of x11] {$x_{1}$};

      \node[above=0.05cm of x12]{$x_{2}$};

    \end{tikzpicture}
    \caption{}
  \end{subfigure}
  \hfill
  \begin{subfigure}[b]{0.3\textwidth}
    \centering
    \begin{tikzpicture}[scale=0.7,x=1.5cm, y=1.5cm, >=stealth]
      \node[every input] (x11) at (0,2.5) {};
      \node[every input] (x12) at (0,1.5) {};

      \node[every neuron] (h11) at (1.2,3) {\includegraphics[width=0.3cm]{lin.pdf}} ;
      \node[every neuron] (h13) at (1.2,1) {\includegraphics[width=0.3cm]{lin.pdf}};

      \node[every neuron] (h22) at (2.6,2) {\includegraphics[width=0.3cm]{lin.pdf}};
      \node[every neuron] (h23) at (2.6,1) {\includegraphics[width=0.3cm]{lin.pdf}};

      \node[every output] (h31) at (3.7,2) {$\sum$};

      \draw  [->] (x11) edge (h11)
         (x11) edge (h13)

         (x12) edge (h11)
         (x12) edge (h13)

         (h11) edge (h22)
         (h11) edge (h23)

         (h13) edge (h22)
         (h13) edge (h23)

         (h22) edge (h31)
         (h23) edge (h31)

         ;

      \node[above=0.05cm of x11] {$x_{1}$};

      \node[above=0.05cm of x12]{$x_{2}$};

    \end{tikzpicture}
    \caption{}
  \end{subfigure}
  \hfill
    \caption{Let's assume that for a particular input $[x_1 \mbox{ } x_2]$ going into the ReLU network shown in (a) the white neurons are \textit{inactive}; then, for this particular input, the network from (a) is equivalent to network in (b) where the inactive neurons are treated as \textit{dead} and the  \textit{active} ones operate in the linear part of their ReLU activation function; which makes both of these networks equivalent to the one in (c); the grey hidden neurons form the active subnetwork.}\label{fig:switched_relu}
\end{figure*}

The basis of SPA is a switched network based on the observation that neurons that produce zero output do not contribute to the computation of the overall output of the network.  The following rationale applies to ReLU networks only.  Later, we will establish how this generalises to networks with other activation functions.

 The notion of \textit{dead} neurons, that is neurons that always output zero, is not new, nor is the realisation that these neurons, along with their connecting weights, can be taken out of a network without any impact on the computation.  In a switched projection, we treat the zero-output neurons as temporarily \textit{dead} for a given instance of input.  We refer to these neurons as \textit{inactive}, since they may become \textit{active} for a different network input.  Thus we isolate the subnetwork of the active neurons in a given computation.  For ReLU activation, the active neurons are those that pass their activity, the weighted sum of their inputs plus bias, directly to their output\footnote{In our terminology, activity denotes output before the activation function and an active neuron is one that produces non-zero output after the activation function; for a ReLU neuron the active and inactive neurons are those that have positive and negative activity respectively.}.  This means that a subnetwork of active ReLU neurons is just a series of linear transformations, which is equivalent to a single linear transformation.  As a result, we can express the computation performed by any neuron in a ReLU network as a projection onto a switched weight vector in the input space plus the switched bias.  The term \textit{switched} indicates that this weight and bias vector changes when the state of the network changes, the state corresponding to the particular combination of the active and inactive neurons in the network.  Figure \ref{fig:switched_relu} illustrates the concept graphically, and a formal description is given in the following theorem:

\begin{restatable}[Switched linear projections for ReLU networks]{theorem}{slpTheorem}
\label{thrm:switchedprojs}
Let $\mathbf{x}\in \mathcal{R}^{d}$ be a  vector of inputs, $\mathbf{w}_{li}\in \mathcal{R}^{U_{l-1}}$ the weight vector, and $b_{li}\in \mathcal{R}$ the bias of neuron $i$ in layer $l$ (with $U_{l-1}$ inputs from the previous layer).  Let the activity of a neuron $i$ in layer $l$ be defined as:
\begin{equation}\label{eqn:activity}
v_{li}(\mathbf{x}) =\Big (\hdots\sigma_r\big (\sigma_r(\mathbf{x}\mathbf{W}_1+\mathbf{b}_1)\mathbf{W}_2+\mathbf{b}_2\big )\hdots\Big )\mathbf{w}_{li}+b_{li},
\end{equation}
where $\mathbf{W}_l=\begin{bmatrix}\mathbf{w}_{l1}^T & \hdots & \mathbf{w}_{lU_l}^T\end{bmatrix}$, $T$ denotes transpose, $\mathbf{b}_l=\begin{bmatrix}b_{l1} & \hdots & b_{lU_l}\end{bmatrix}$ and $\sigma_r(v)=\max(v,0)$ is the ReLU activation function.  If we define an input-dependent state of the network as \\ $\mathbf{W}_l^{(\mathbf{x})} =\begin{bmatrix}\dot{\sigma}_r\big (v_{l1}(\mathbf{x})\big )\mathbf{w}_{l1}^T & \hdots & \dot{\sigma}_r\big (v_{lU_l}(\mathbf{x})\big )\mathbf{w}_{lU_l}^T\end{bmatrix}$ and \\$b_l^{(\mathbf{x})}=\begin{bmatrix}\dot{\sigma}_r(v_{l1}\big (\mathbf{x})\big )b_{l1} & \hdots & \dot{\sigma}_r\big (v_{lU_l}(\mathbf{x})\big )b_{lU_l}\end{bmatrix}$,\\ where $\dot{\sigma}_r(v)=\frac{d\sigma_r(v)}{dv}$, then for\\ $\swx{T}{li}=\mathbf{W}_1^{(\mathbf{x})}\mathbf{W}_2^{(\mathbf{x})}\hdots \mathbf{W}_{l-1}^{(\mathbf{x})}\mathbf{w}^T_{li}$ and  $\sbx{li}=\mathbf{b}_1^{(\mathbf{x})}\mathbf{W}_2^{(\mathbf{x})}\hdots\mathbf{W}_{l-1}^{(\mathbf{x})}\mathbf{w}^T_{li}$ $+\mathbf{b}_2^{(\mathbf{x})}\mathbf{W}_3^{(\mathbf{x})}\hdots\mathbf{W}_{l-1}^{(\mathbf{x})}\mathbf{w}^T_{li}$ $+\hdots+\mathbf{b}_{l-1}\mathbf{w}^T_{li}+b_{li}$, we have

\begin{equation}\label{eqn:swactivity}
v_{li}(\mathbf{x}) =\mathbf{x}\swx{T}{li}+\sbx{li}. 
\end{equation}
\end{restatable}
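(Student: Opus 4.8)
The plan is to exploit a single elementary property of the ReLU nonlinearity: for every $v \ne 0$ we have the pointwise identity $\sigma_r(v) = \dot{\sigma}_r(v)\,v$, since $\dot{\sigma}_r(v)=1$ when $v>0$ (so $\dot{\sigma}_r(v)\,v = v = \sigma_r(v)$) and $\dot{\sigma}_r(v)=0$ when $v<0$ (so $\dot{\sigma}_r(v)\,v = 0 = \sigma_r(v)$). The derivative factor thus acts as an input-dependent \emph{switch}, equal to $1$ on active neurons and $0$ on inactive ones, and it lets us replace every ReLU output by a quantity that is linear in that neuron's pre-activation \emph{without changing its value}. The rest of the argument is then pure bookkeeping: substitute this identity layer by layer and collect the resulting linear and constant parts.

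First I would make the forward pass explicit with layer-wise notation: let $\mathbf{h}_0 := \mathbf{x}$ and $\mathbf{h}_l := \sigma_r(\mathbf{v}_l)$ for $l \ge 1$, where $\mathbf{v}_l = \mathbf{h}_{l-1}\mathbf{W}_l + \mathbf{b}_l$ collects the layer-$l$ activities, so that in particular $v_{li}(\mathbf{x}) = \mathbf{h}_{l-1}\mathbf{w}_{li}^T + b_{li}$. Applying the identity componentwise gives $h_{lj} = \sigma_r(v_{lj}(\mathbf{x})) = \dot{\sigma}_r\big(v_{lj}(\mathbf{x})\big)\big(\mathbf{h}_{l-1}\mathbf{w}_{lj}^T + b_{lj}\big)$, and absorbing each switch $\dot{\sigma}_r(v_{lj}(\mathbf{x}))$ into the $j$-th column of $\mathbf{W}_l$ and the $j$-th entry of $\mathbf{b}_l$ reproduces exactly the definitions of $\mathbf{W}_l^{(\mathbf{x})}$ and $\mathbf{b}_l^{(\mathbf{x})}$ from the statement. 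Hence the post-activations obey the \emph{affine} recurrence $\mathbf{h}_l = \mathbf{h}_{l-1}\mathbf{W}_l^{(\mathbf{x})} + \mathbf{b}_l^{(\mathbf{x})}$.

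Next I would unroll this recurrence by induction on $l$, from the base $\mathbf{h}_0 = \mathbf{x}$, to obtain $\mathbf{h}_{l-1} = \mathbf{x}\,\mathbf{W}_1^{(\mathbf{x})}\cdots\mathbf{W}_{l-1}^{(\mathbf{x})} + \sum_{k=1}^{l-1}\mathbf{b}_k^{(\mathbf{x})}\mathbf{W}_{k+1}^{(\mathbf{x})}\cdots\mathbf{W}_{l-1}^{(\mathbf{x})}$, where the matrix product telescopes the switched linear parts and the sum accumulates each switched bias propagated through the subsequent switched layers. Substituting this into $v_{li}(\mathbf{x}) = \mathbf{h}_{l-1}\mathbf{w}_{li}^T + b_{li}$ and right-multiplying through by $\mathbf{w}_{li}^T$ term by term, the coefficient of $\mathbf{x}$ becomes $\mathbf{W}_1^{(\mathbf{x})}\cdots\mathbf{W}_{l-1}^{(\mathbf{x})}\mathbf{w}_{li}^T = \swx{T}{li}$, while the remaining constant is precisely the expression defining $\sbx{li}$, giving the claimed $v_{li}(\mathbf{x}) = \mathbf{x}\swx{T}{li} + \sbx{li}$.

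The only delicate point, and the one I would state explicitly, is the status of the identity at $v=0$, where $\sigma_r$ is not differentiable. Since $\dot{\sigma}_r$ is defined almost everywhere, the derivation holds for every $\mathbf{x}$ outside the measure-zero set on which some activity $v_{lj}(\mathbf{x})$ vanishes exactly; adopting a fixed convention there, e.g. $\dot{\sigma}_r(0) := 0$, which keeps a zero-activity neuron inactive and preserves $\sigma_r(0)=0$, removes the ambiguity and makes the identity, hence the whole argument, valid everywhere. I expect this to be the main (and essentially the only) obstacle: the remaining content is the self-consistency of the switch values, namely that the very same $v_{lj}(\mathbf{x})$ produced by the genuine forward pass appears both inside $\dot{\sigma}_r(\cdot)$ and as the pre-activation being linearized, so that no circularity arises and the switched projection reproduces the true activity \emph{exactly} rather than approximating it.
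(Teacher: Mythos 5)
Your proposal is correct and follows essentially the same route as the paper's own proof: both hinge on the switch identity $\sigma_r(v)=\dot{\sigma}_r(v)\,v$, which turns each layer into an affine map with the switched parameters $\mathbf{W}_l^{(\mathbf{x})},\mathbf{b}_l^{(\mathbf{x})}$, and then unroll the resulting recurrence from $\mathbf{h}_0=\mathbf{x}$. Your write-up is in fact somewhat more explicit than the paper's (which simply asserts that recursive substitution yields a single linear transformation), since you carry out the induction on the affine recurrence and address the $v=0$ non-differentiability point, which the paper handles implicitly by defining $\dot{\sigma}_r(v)=0$ for $v\le 0$.
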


The proof is provided in Appendix \ref{apx: proof}.  Note that the ReLU derivative, $\dot{\sigma}_r(v)$, is just a convenient definition for a step function, so that 
\begin{equation}
\dot{\sigma}_r(v)\mathbf{w}=\begin{cases}\mathbf{w}, &v>0\\ \mathbf{0}, & \mbox{otherwise.}\end{cases}
\end{equation}
To simplify the notation, whenever referring to the parameters of the switched projection $\sw{\mathbf{w}}$, $\sw{b}$, as well as activity $v$, we will drop the explicit dependency on $\mathbf{x}$.  

While Figure \ref{fig:switched_relu} illustrates the switching concepts on a small fully connected ReLU network, switched linear projections can be computed for networks with convolutional as well as pooling layers.  A convolutional layer is just a special case of a fully connected layer with many weights being zero and groups of neurons constrained to share the weight values on their connections.  For max pooling, the neurons that do not win the competition, and thus their output does not affect the computation from then on, are deemed to be \textit{inactive} regardless of the output they produce.    

In fact, it is fairly obvious from Equation \ref{eqn:swactivity} that a given neuron's switched weight vector is just the derivative of its activity with respect to the network input.  The switched weight vector is just the tangent hyperplane to a given neuron's activity function at $\mathbf{x}$, and the switched bias is the difference $v(x)-\mathbf{x}\sw{\mathbf{w}}^T$.  Thus, we can establish a corollary to Theorem \ref{thrm:switchedprojs} that generalises the notion of the switched linear projection.

\begin{cortheorem}
Switched linear projection for any neuron with activity $v(\mathbf{x})$ where $\mathbf{x}=\begin{bmatrix}x_1 & \hdots & x_d\end{bmatrix}$ is described by the switched weight vector
\begin{equation}
\sw{\mathbf{w}}=\begin{bmatrix}\frac{\partial v(\mathbf{x})}{\partial x_1} & \hdots & \frac{\partial v(\mathbf{x})}{\partial x_d}\end{bmatrix},
\end{equation}
and switched bias
\begin{equation}
\sw{b}=v(\mathbf{x})-\mathbf{x}\sw{\mathbf{w}}^T.
\end{equation}
\end{cortheorem}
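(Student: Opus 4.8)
The plan is to reduce this statement to Theorem \ref{thrm:switchedprojs} by recognizing that the switched weight vector produced there is literally the gradient of the activity, after which the bias formula is a trivial rearrangement and the non-ReLU case becomes a consistent definition. I would start with the ReLU case of Equation (\ref{eqn:swactivity}) and verify that $\sw{\mathbf{w}}=\swx{}{li}$ coincides with $[\,\partial v/\partial x_1\ \hdots\ \partial v/\partial x_d\,]$.

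The key computation is applying the chain rule to Equation (\ref{eqn:activity}). Differentiating layerwise, each Jacobian factor contributes a diagonal matrix whose entries are $\dot{\sigma}_r\big(v_{lj}(\mathbf{x})\big)$, and multiplying this diagonal into $\mathbf{W}_l$ is exactly the definition of $\mathbf{W}_l^{(\mathbf{x})}$. Composing these Jacobians from the input up to layer $l-1$ and appending $\mathbf{w}_{li}^T$ yields $\mathbf{W}_1^{(\mathbf{x})}\mathbf{W}_2^{(\mathbf{x})}\hdots\mathbf{W}_{l-1}^{(\mathbf{x})}\mathbf{w}_{li}^T=\swx{}{li}$, which is precisely the switched weight vector; this establishes $\sw{\mathbf{w}}=\nabla_{\mathbf{x}}v$. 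Substituting back into Equation (\ref{eqn:swactivity}) and solving for the additive constant then gives $\sw{b}=v(\mathbf{x})-\mathbf{x}\sw{\mathbf{w}}^T=\sbx{li}$, matching the theorem.

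For a neuron with an arbitrary continuous, almost-everywhere-differentiable activation, I would take the two displayed equations of the corollary as the \emph{defining} equations of the generalised switched projection and verify only that (i) they recover the ReLU construction, which is done above, and (ii) they satisfy the analogue of Equation (\ref{eqn:swactivity}). Point (ii) is immediate: since $\sw{b}$ is defined as the residual $v(\mathbf{x})-\mathbf{x}\sw{\mathbf{w}}^T$, we have $\mathbf{x}\sw{\mathbf{w}}^T+\sw{b}=v(\mathbf{x})$ identically at $\mathbf{x}$. The switched projection is thus the first-order tangent-hyperplane model that matches both the value and the gradient of $v$ at the point $\mathbf{x}$.

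The main obstacle is justifying the differentiation step despite the apparent dependence of $\swx{}{li}$ and $\sbx{li}$ on $\mathbf{x}$. The resolution is that the ReLU activation pattern is constant on an open neighbourhood of any $\mathbf{x}$ lying off the measure-zero set where some neuron has exactly zero activity; on that neighbourhood the matrices $\mathbf{W}_l^{(\mathbf{x})}$ and the bias terms are locally constant, so $v_{li}$ is genuinely affine in $\mathbf{x}$ and its gradient is the constant coefficient $\swx{}{li}$, with no spurious terms arising from differentiating the switches. On the boundary set the kink of $\sigma_r$ makes the derivative undefined, which is exactly the almost-everywhere caveat noted after the theorem, and there the gradient is taken by the one-sided convention consistent with the step-function definition of $\dot{\sigma}_r$.
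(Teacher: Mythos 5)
Your proposal is correct and follows essentially the same route as the paper, which offers no formal proof beyond observing that Equation (\ref{eqn:swactivity}) makes the switched weight vector ``just the derivative of its activity with respect to the network input'' and the switched bias the residual $v(\mathbf{x})-\mathbf{x}\sw{\mathbf{w}}^T$. Your write-up simply makes that observation rigorous --- the chain-rule identification of each Jacobian factor with $\mathbf{W}_l^{(\mathbf{x})}$ and the local constancy of the activation pattern off the kink set are exactly the details the paper leaves implicit.
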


Thus libraries with automatic differentiation make the computation of switched linear projection fairly straight forward.

\section{Input component decomposition (ICD)}

The benefit of expressing a neuron's activity in terms of a switched linear projection is that it can be decomposed into contributions from its input.  For this we need additional re-interpretation of activity in order to distribute the contribution of the switched bias over the components of the input vector.  Note that for a linear projection
\begin{equation}\label{eqn:vcentre}
v=\mathbf{x}\mathbf{w}^T+b=(\mathbf{x}-\mathbf{c})\mathbf{w}^T,
\end{equation}
where $\mathbf{c}=\mathbf{x}-\frac{v}{\mathbf{w}\mathbf{w}^T}\mathbf{w}$, $b=-\mathbf{c}\mathbf{w}^T$,  $\mathbf{c}\in\mathcal{R}^d$, and $\mathbf{w} \ne \mathbf{0}$.  Point $\mathbf{c}$ is a point on the hyperplane $\mathbf{x}\mathbf{w}^T=0$ that is closest to $\mathbf{x}$.  The vector $\mathbf{c}$ can be also thought of as a translation of the coordinate system to a neuron-centered one, where $\mathbf{w}$ goes through the origin at $\mathbf{c}$.  \citet{Montavon.etal:2017} call this vector \textit{the nearest root point}, but we will refer to it as the neuron's \textit{centre}.  Since the switched projection is a linear projection we can calculate the switched centre and use it to break down the contribution of the switched projection into $d$ components of the input vector.

\begin{defn}[Input component decomposition (ICD)]
  Given input vector $\mathbf{x}=\begin{bmatrix}x_1 & \hdots x_d\end{bmatrix}$ and a neuron with activity $v=\mathbf{x}\sw{\mathbf{w}}+\sw{b}$, where $\sw{\mathbf{w}}=\begin{bmatrix}\sw{w}_1 & \hdots \sw{w}_d\end{bmatrix}$, $b$ are its switched linear projection, define the ICD vector, $\sw{\bm{\nu}}$ as:
  \begin{equation}
  \sw{\bm{\nu}}=\begin{bmatrix}\sw{\nu}_1 & \hdots & \sw{\nu}_d\end{bmatrix}
  \end{equation}

  where

  \begin{equation}
    \sw{\nu}_j=(x_j-\sw{c}_j)\sw{w}_j
  \end{equation}

  and

  \begin{equation}\label{eqn:switchedc}
    \sw{\mathbf{c}}=\begin{bmatrix}c_1 & \hdots & c_d\end{bmatrix}=\mathbf{x}-\frac{v}{\sw{\mathbf{w}}\sw{\mathbf{w}}^T}\sw{\mathbf{w}}.
  \end{equation}  
\end{defn}

\begin{proposition}
\begin{equation}\label{eqn:vnu}
v=\sum_{j=1}^d \sw{\nu_j},  
\end{equation}
\end{proposition}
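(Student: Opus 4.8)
The plan is to reduce the claim to the centered-projection identity already recorded in Equation \ref{eqn:vcentre}, specialized to the switched weight and bias. First I would unpack the right-hand side componentwise. Since $\sw{\nu}_j=(x_j-\sw{c}_j)\sw{w}_j$ by definition, the sum is literally the inner product
\begin{equation*}
\sum_{j=1}^d \sw{\nu}_j=\sum_{j=1}^d (x_j-\sw{c}_j)\sw{w}_j=(\mathbf{x}-\sw{\mathbf{c}})\sw{\mathbf{w}}^T.
\end{equation*}
This is the only rewriting step, and it is immediate from the definition of the dot product; no structure of the network is needed here.

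Next I would substitute the definition of the switched centre from Equation \ref{eqn:switchedc}, namely $\sw{\mathbf{c}}=\mathbf{x}-\frac{v}{\sw{\mathbf{w}}\sw{\mathbf{w}}^T}\sw{\mathbf{w}}$. Plugging this into the inner product makes the two copies of $\mathbf{x}$ cancel, leaving $(\mathbf{x}-\sw{\mathbf{c}})\sw{\mathbf{w}}^T=\frac{v}{\sw{\mathbf{w}}\sw{\mathbf{w}}^T}\,\sw{\mathbf{w}}\sw{\mathbf{w}}^T$. Because $\sw{\mathbf{w}}\sw{\mathbf{w}}^T$ is a nonzero scalar it cancels with the denominator, and the expression collapses to $v$, which establishes Equation \ref{eqn:vnu}. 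In effect the proposition is nothing more than Equation \ref{eqn:vcentre} applied to the switched linear projection $(\sw{\mathbf{w}},\sw{b})$, reread through the componentwise decomposition; the switched bias $\sw{b}$ never appears explicitly because it has been absorbed into the centre $\sw{\mathbf{c}}$.

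The only point requiring care is the degenerate case $\sw{\mathbf{w}}=\mathbf{0}$, in which the scalar $\sw{\mathbf{w}}\sw{\mathbf{w}}^T$ vanishes and $\sw{\mathbf{c}}$ is undefined --- precisely the $\sw{\mathbf{w}}\ne\mathbf{0}$ caveat attached to Equation \ref{eqn:vcentre}. I would note that this corresponds to a neuron whose entire active path has been switched off, so that $v=\sw{b}$ is a constant with no dependence on $\mathbf{x}$ and the decomposition into input components is vacuous; this case can therefore be excluded without loss. Beyond this caveat there is no genuine obstacle, so I expect the main ``difficulty'' to be purely presentational: keeping the row/column conventions of $\mathbf{x}$, $\sw{\mathbf{w}}$ and $\sw{\mathbf{c}}$ consistent with Equation \ref{eqn:vcentre} throughout the one-line computation.
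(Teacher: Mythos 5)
Your proof is correct and takes essentially the same route as the paper's: the paper's (one-line) proof likewise rewrites the sum as $v=(\mathbf{x}-\sw{\mathbf{c}})\sw{\mathbf{w}}^T$ and substitutes $\sw{\mathbf{c}}$ from Equation \ref{eqn:switchedc}, after which everything cancels. Your extra remark on the degenerate case $\sw{\mathbf{w}}=\mathbf{0}$ is a sensible point of care that the paper leaves implicit in the $\mathbf{w}\ne\mathbf{0}$ caveat attached to Equation \ref{eqn:vcentre}.
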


\begin{proof}
The proof of the proposition is trivial once Equation \ref{eqn:vnu} is expressed as $v=(\mathbf{x}-\sw{\mathbf{c}})\sw{\mathbf{w}}^T$ and evaluated after the substitution for $\sw{\mathbf{c}}$ from Equation \ref{eqn:switchedc}.  
\end{proof}

The switched centre $\sw{\mathbf{c}}$ is related to the concept of \textit{reference} in DeepLIFT  \cite{Avanti.etal:2017} that gets subtracted from the input in order to extract a pattern of interest.  However, whereas in DeepLIFT the \textit{reference} is external to the model, and used for examination of perturbations induced in the output, our proposed centre is a component of the actual computation of the network's output; one could say, $\sw{\mathbf{c}}$ is a given neuron's inherent \textit{reference}. 

\begin{figure}
  \centering
  \includegraphics[width=0.42\textwidth]{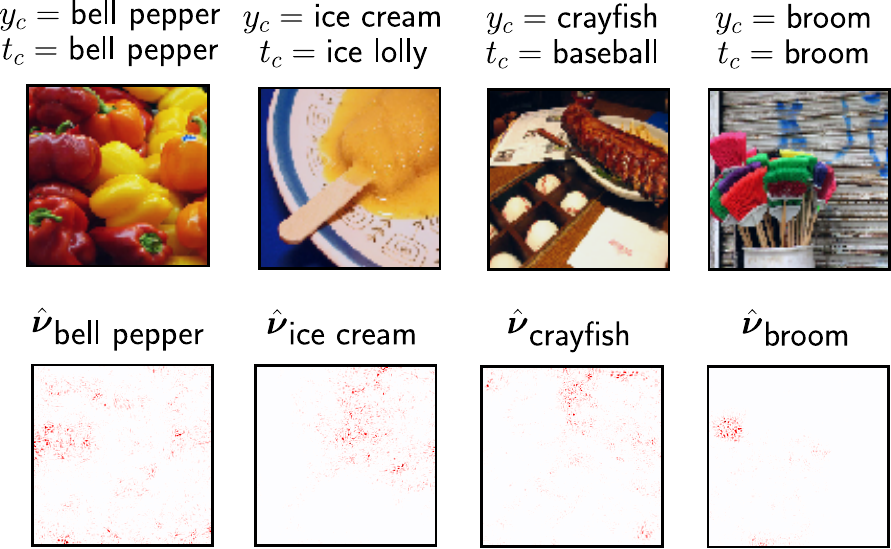}  
  \caption{ICD vectors $\sw{\bm{\nu}}$ of the winning neurons of the VGG16 network for four different input images; each $\sw{\bm{\nu}}$ has been normalised and displayed as a red-blue (negative-positive) heatmap; only red is visible since in each instance the activity of the winning neurons was positive; $y_c$ indicates the predicted label of the winning neuron, $t_c$ the true label.}  
  \label{fig:decomp}
\end{figure}

We take $\nu_j=(x_j-c_j)w_j$ to be the contribution of the input component $j$ to the neuron's activity $v$.  Figure \ref{fig:decomp} shows a visualisation of $\sw{\bm{\nu}}$ of the winning neuron of the VGG16 network \cite{Simonyan.etal:2014} for different input from a subset of images from the Imagenet \cite{imagenet} dataset.  These visualisations do not provide a good picture of what is going on inside the network.  The problem is that input component decomposition, while breaking the activity into contributions of different components, does not account for relationships between those components.  In images, for instance, it is not individual pixels that matter, but patterns across groups of pixels.  We require decomposition into patterns over the components of the input and not contributions from individual components.

\section{Singular pattern analysis (SPA)}

The internal representation of a neural network is distributed over many neurons.  In order to disentangle it, we propose a decomposition of an individual layer's\footnote{Actually, the analysis works for any subset of hidden neurons over the whole network, but it is not clear if it makes sense to decompose patterns across layers.} activity vector $\mathbf{v}$ into set of orthogonal pattern vectors.

Given a set of $M$ neurons in a hidden layer, and their corresponding ICD vectors $\sw{\bm{\nu}}$, we can create a $d\times M$ activity matrix $V$ which summed row-wise gives $\mathbf{v}$.  The SVD of $V$ gives us the desired $d$-dimensional orthogonal vectors.  

\begin{defn}[ICD matrix]
  Let the $d \times M$ ICD matrix of a hidden layer with $M$ neurons be:
  \begin{equation}
    V=\begin{bmatrix}\sw{\bm{\nu}}_1^T & \hdots ... & \sw{\bm{\nu}}_M^T \end{bmatrix}
  \end{equation}
\end{defn}

\begin{defn}[Singular patterns]
  The singular patterns of a layer are the left singular vectors of $V$ as defined by the compact SVD:
\begin{equation}
  V = U \Sigma H.
\end{equation}
where $U = \begin{bmatrix}\mathbf{u}^T_1 & \hdots & \mathbf{u}^T_M\end{bmatrix}$, $\Sigma$ contains the singular values in decreasing order, and $H = \begin{bmatrix}\mathbf{h}^T_1 & \hdots & \mathbf{h}^T_M\end{bmatrix}$.
\end{defn}


Without loss of generality, and for ease of exposition, we assume that $d \ge M$, and hence $U$ is of size $d \times M$, but the analysis is similar when $d<M$.

\begin{figure}
  \centering
  \includegraphics[width=0.38\textwidth]{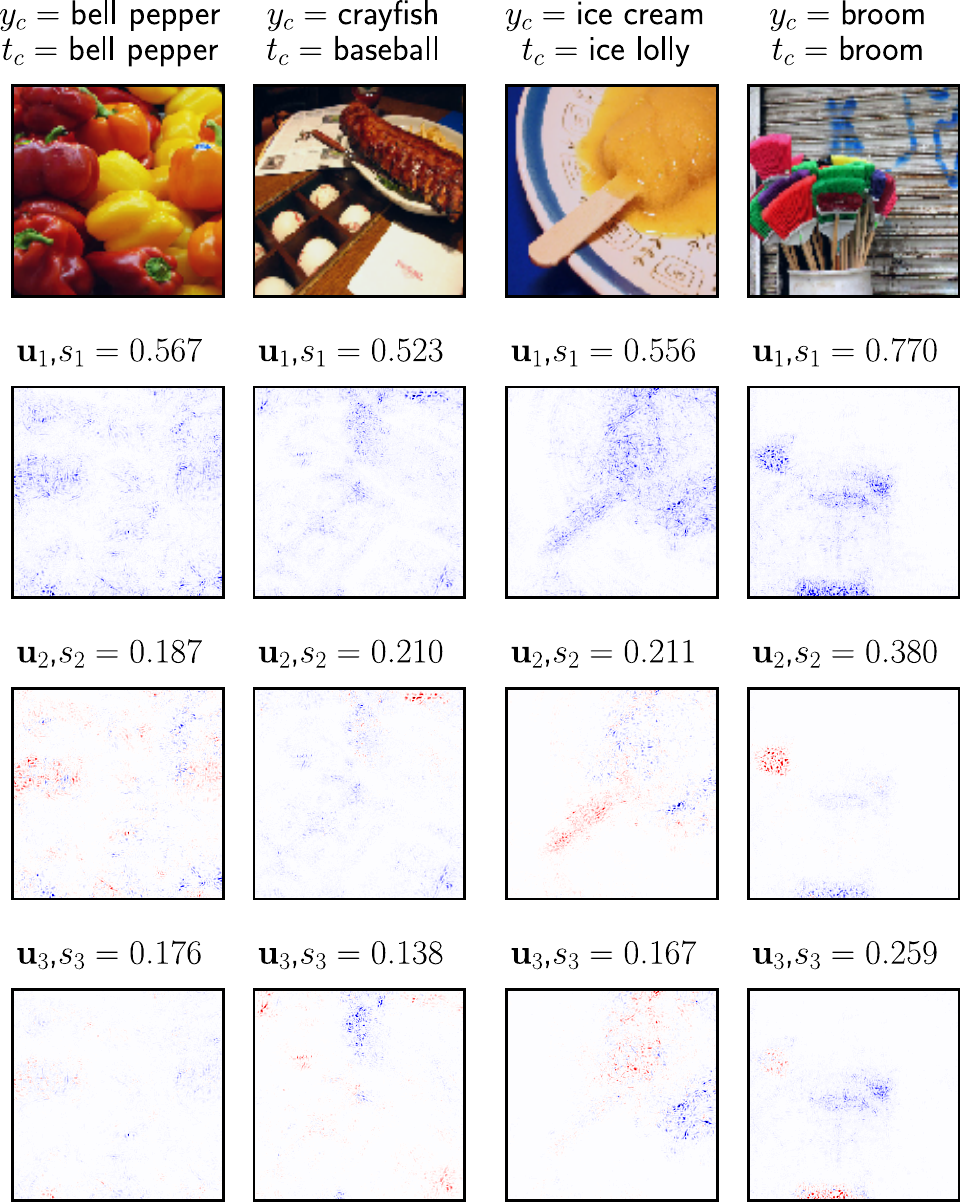}  
  \caption{Three most significant singular patterns of VGG16's output layer's activity for four different inputs; the pattern vectors $\mathbf{u}$ are normalised and shown as blue-red (negative-positive) heatmaps with their corresponding singular values $s$ shown above; significance decreases going from top to bottom; $y_c$ indicates the predicted label of the winning neuron, $t_c$ the true label.}
  \label{fig:ucomp}
\end{figure}

The $M$ vectors $\mathbf{u}_1,...\mathbf{u}_M$ are the patterns in the input space that make up the activity of $\mathbf{v}$.  Singular values relate the scaling of these vectors, and $\mathbf{h}_m$ the mixing coefficients of these patterns to produce activity $v_m$.  

Figure \ref{fig:ucomp} shows visualisations of the three most significant patterns from the output layer of VGG16 for different image inputs.  The significance in this visualisation has been taken from the singular values of the SVD, which weight the power of each pattern in the makeup of the layer's activity.  We'll refer to this weighting as the \textit{broad significance}.  

\begin{figure*}
     \centering
     \begin{subfigure}[b]{0.31\textwidth}
         \centering
         \includegraphics[width=0.8\textwidth]{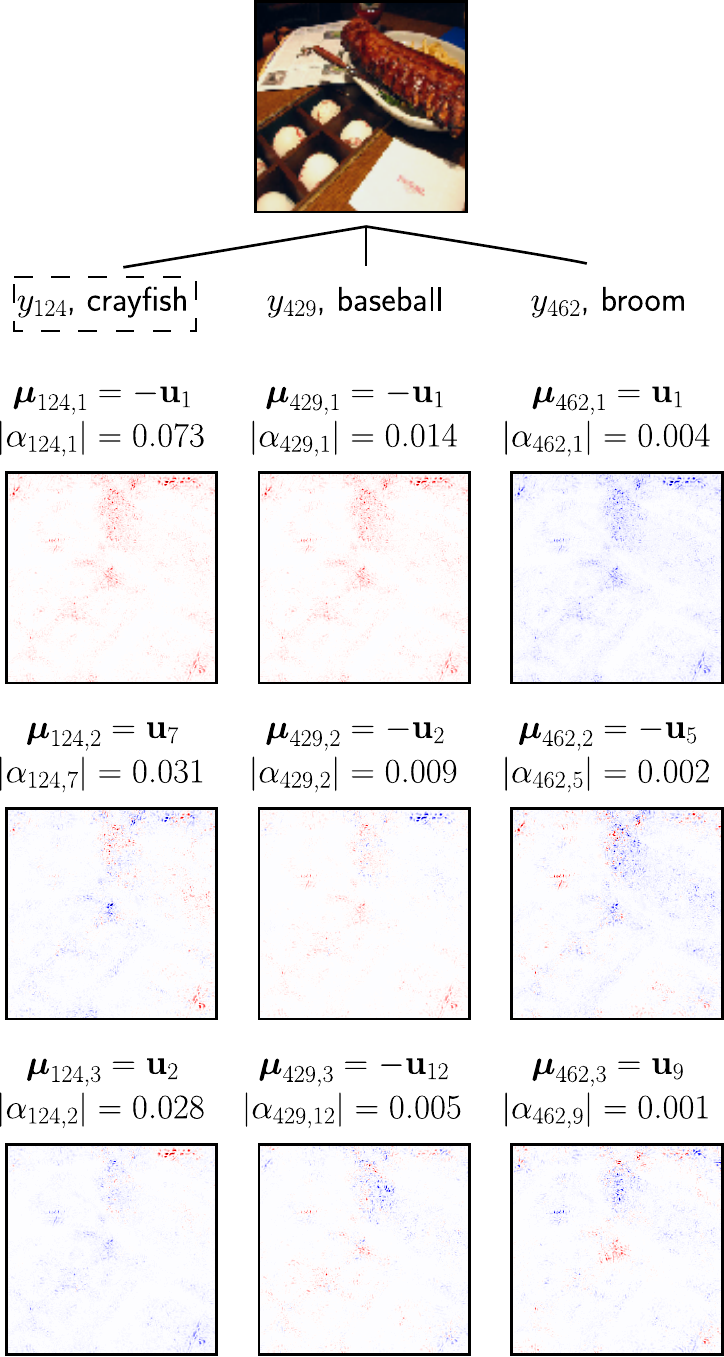}
     \end{subfigure}
     \hfill
     \begin{subfigure}[b]{0.31\textwidth}
         \centering
         \includegraphics[width=0.8\textwidth]{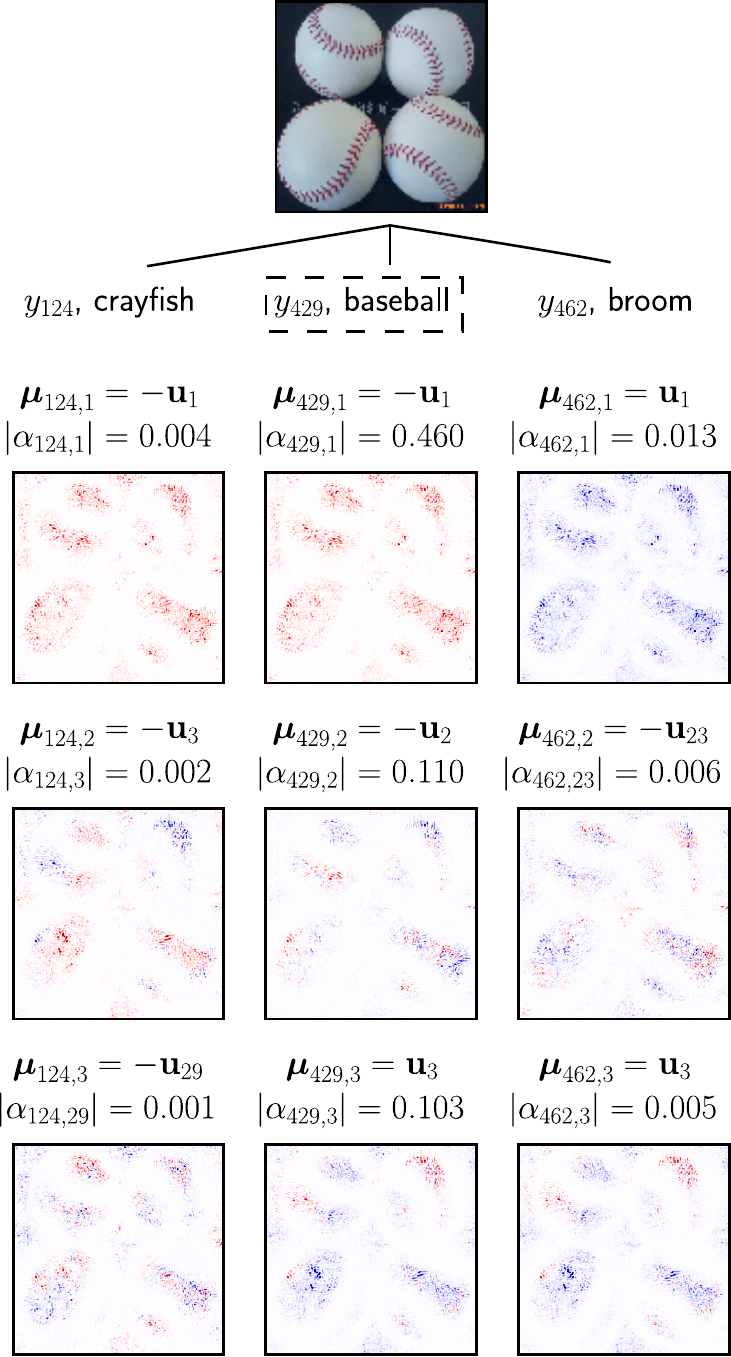}
     \end{subfigure}
     \hfill
     \begin{subfigure}[b]{0.31\textwidth}
         \centering
         \includegraphics[width=0.8\textwidth]{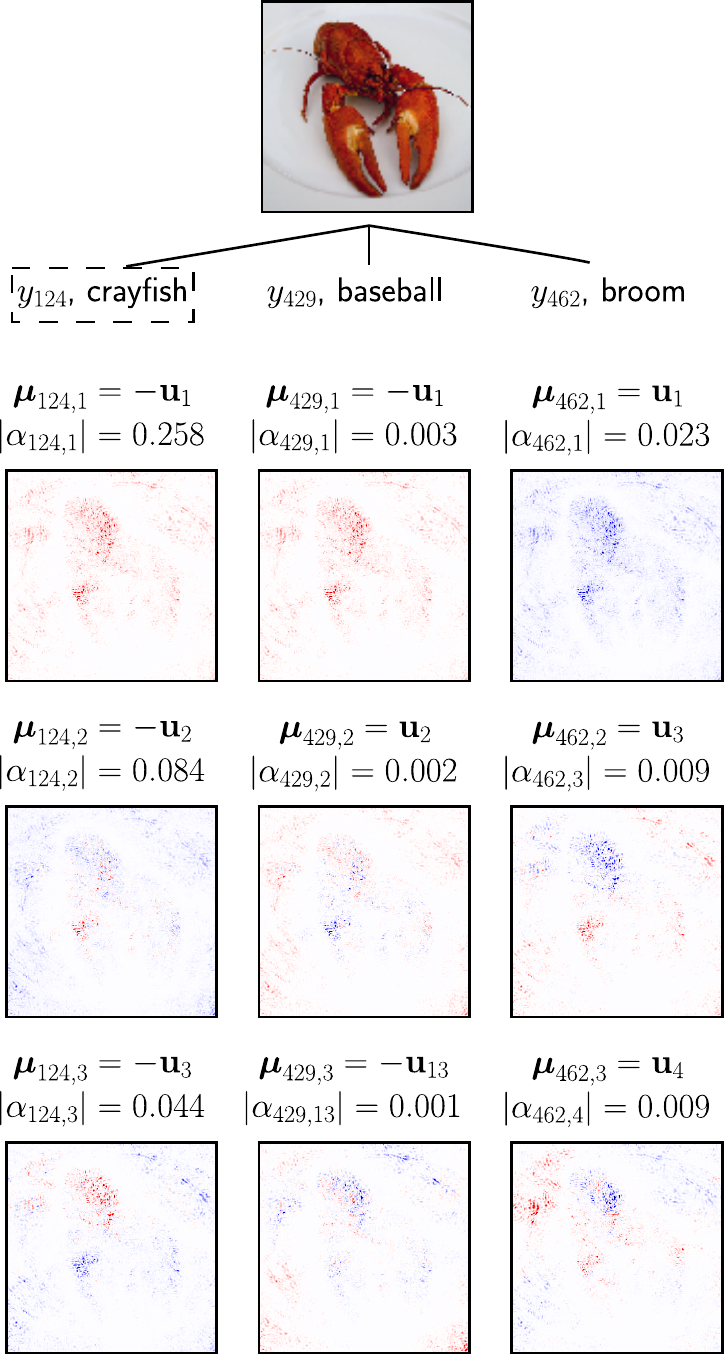}
     \end{subfigure}

        \caption{Three top singular patterns in order of neuron significance for three images: baseballs incorrectly classified by the VGG16 network as ``crayfish" (left), baseballs correctly classified as ``baseball" (middle) and crayfish correctly classified as ``crayfish";  the pattern vectors are normalised and shown as blue-red (negative-positive) heatmaps with labels specifying $\mu_{m,i}=\text{sign}({\alpha_{m,j}})\mathbf{u}_j$ and the absolute value of  $\alpha_{mi}$, where $m$ indexes the output neuron, $i$ the order according to narrow significance, $j$ the order according to broad significance; each column corresponds to different output with its corresponding label; the winning neuron's label is enclosed in a dashed box; neuron significance decreases from top to bottom.}
        \label{fig:vcomp}
\end{figure*}

The mixing coefficients allow us to examine these patterns from the point of view of a single neuron in the layer.  Ordering patterns according to the absolute value of $\alpha_{mi}=s_ih_{mi}$ gives us their \textit{narrow significance}.  Note that the patterns for single-neuron analysis are still the same singular patterns that collectively determine the activity of the entire layer.  The narrow versus broad significance is just different ordering of those patterns.  For instance, if there were two patterns $\mathbf{u}_1$ and $\mathbf{u}_2$ with $s_1=10$ and $s_2=1$ respectively, then their broad significance order tells us that collectively in the activity of all the neurons of that layer $\mathbf{u}_1$ plays a bigger role than $\mathbf{u}_2$.  But if the corresponding mixing coefficients for a particular neuron from that layer were $\alpha_1=0.01$ and $\alpha_2=1.5$, then $s_1\alpha_1 < s_2\alpha_2$ and the narrow significance would reverse the order, meaning there is more $\mathbf{u}_2$ in the activity of that individual neuron than $\mathbf{u}_1$.     

Figure \ref{fig:vcomp} shows three most significant patterns for three (out of thousand) output neurons of the VGG16 network for three different input images.  The patterns are labeled as $\bm{\mu}_{m,i}=\text{sign}(\alpha_{m,j})\mathbf{u}_j$, where $m$ is the index of the neuron in the layer, $i$ is the index of the pattern according to narrow significance ordering, and $j$ is the index of the pattern according to the broad significance ordering.  Since the sign of the mixing coefficient is ignored for the purpose of the narrow ordering, it precedes $\mathbf{u}_j$ and is factored into the visualisations, flipping the colours of the heatmap when $\alpha_{m,j}$ is negative.  The three input images used for the visualisations are: image of baseball (incorrectly classified as ``crayfish") and two correctly classified images -- one of baseballs and the other of crayfish.  

\begin{figure}
  \centering
  \includegraphics[width=0.4\textwidth]{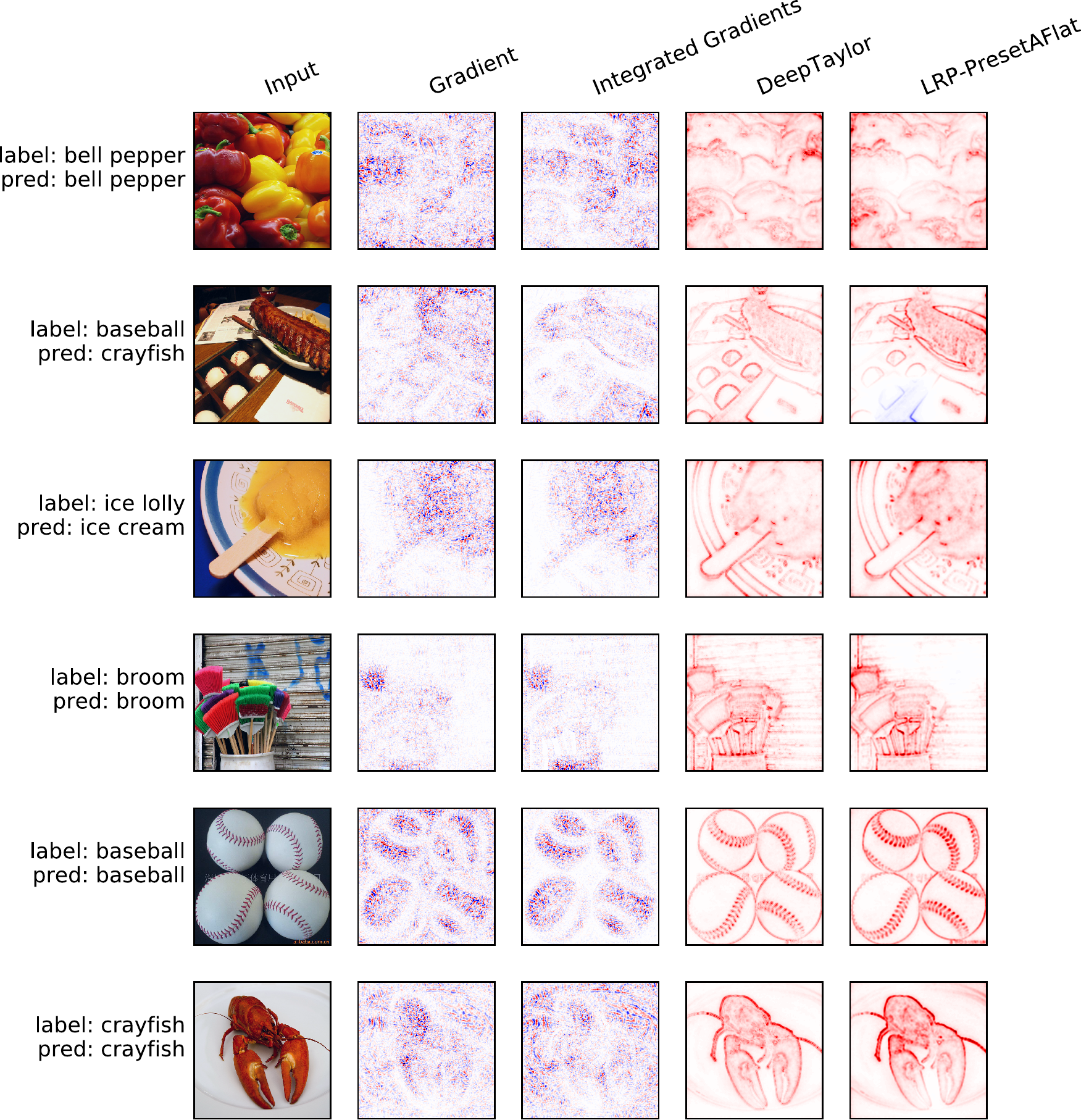}  
  \caption{Visualisations for a set of existing interpretability methods of VGG16 tested on images from the Imagenet dataset.}
  \label{fig:others}
\end{figure}

For comparison, in Figure \ref{fig:others},  we include visualisations of a selection of other interpretability methods: plain gradient sensitivity, Deep Taylor decomposition \cite{Montavon.etal:2017}, Integrated gradients \cite{Sundararajan.etal:2017} and Layerwise relevance propagation \cite{Bach.etal:2015} as implemented by the iNNvestigate toolbox \cite{Alber.etal:2018}.  While they deliver some of the same information as SPA visualisation, they do not break the representation into components as comprehensively as SPA.

\subsection{Representational capacity}

Looking back at Figure \ref{fig:ucomp}, note that the ratio between singular values of the patterns shown varies.  Just like judging the ``true dimensionality" of a set of points through eigenvalues of PCA, we can gauge the utilisation of the representational power of a layer through the singular values of the SPA.  If  all $M$ SPA patterns in an $M$-neuron are equally significant, their singular values will be uniformly distributed corresponding to full use of the entire $M$ dimensions for the corresponding input instance.  If most of the power is concentrated in a fraction of the singular vectors, then only that fraction of representational power is used.  We now have the means of judging the usage of representational capacity of a neural network.     

\begin{defn}[Representational power]
  Given some $0<\gamma<1$, define the instance-based representational power at $\gamma$ for a given layer as:
  \begin{equation}
R_\gamma(x) = | S |,
  \end{equation}
  where $\mathcal{S}=\{\hat{s}|\sum_{i=1}^{|\mathcal{S}|}\hat{s}_i\ge \gamma\}$ and $\hat{s}=s/\sum s_j$ is a normalised singular value from the SVD of $V$ due to some input $\mathbf{x}$. 
\end{defn}

\begin{figure}
  \centering
  \includegraphics[width=0.42\textwidth]{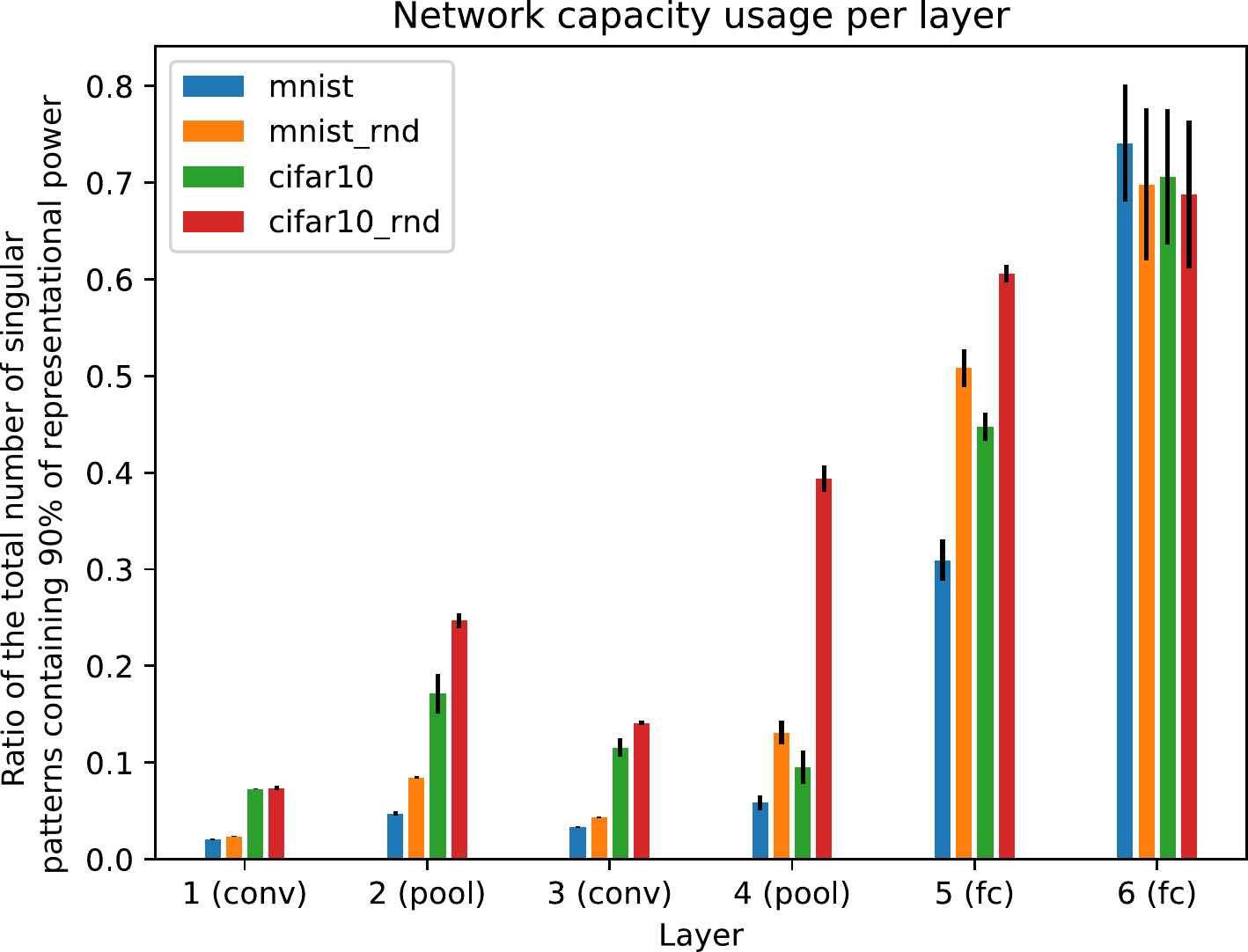}  
  \caption{Instance-based representational capacity usage of layers in a 2CONV neural network  showing the average proportion of the singular values from SPA containing $\gamma=0.9$ of the activity after training on true and randomly labelled MNIST and CIFAR10 datasets; representational power was computed individually for each of 100 randomly chosen images and then averaged.}
  \label{fig:capacity}
\end{figure}

We specify this representational capacity as instance-based, since it only measures capacity of the network usage with respect to individual input instances.  Taking the average over different inputs gives some idea of the overall capacity usage, but it does not tell us whether the same singular patterns recur between input instances, or if the patterns are different.  
 
Figure \ref{fig:capacity} shows the the average instance-based capacity usage of a simple convolutional neural network (from now on referred to as the 2CONV neural network) with two convolutional layers, each followed by a max pool layer, with fully connected penultimate and output layers.  We trained this network on MNIST \cite{mnist} and CIFAR10 \cite{cifar10} datasets -- once with the proper labels (test accuracy of 99.4\% for MNIST and 81\% accuracy on CIFAR10), and a second time with randomised labels (train accuracy\footnote{Test accuracy for random labelled data is low since there is no generalisation for scrambled labels.} of 90\% for random MNIST and 99.8\% for random CIFAR10 respectively).  The experimental setup is similar to that done in \citet{Arpit.etal:2017}, but whereas the authors of that publication had to use various proxies to measure the level of network's ``memorisation", we can simply look at how efficiently the activity of different layers breaks into SPA vectors.  From Figure \ref{fig:capacity} it is evident that MNIST takes less representational power than random MNIST, which is far less than CIFAR10 and random CIFAR10.

The fact that we can examine representational capacity usage layer by layer tells several interesting things about the operation of the network.  In the first layer the capacity usage is the same for a given dataset, regardless of whether it had true or random labelling.  This means that the first layer's representation is related to the complexity of the input space alone, irrespective of the labelling, with MNIST, as expected, being simpler than CIFAR10.  The trend gradually shifts through layers 2 and 3 until in layer 4 and 5 more representational capacity is spent in random labelled as opposed to true labelled datasets; other than that MNIST is still simpler than CIFAR10.  The representation power of the last layer seems to be the same for all datasets with the output just mapping internal computation to the labels.

\section{Inactive state}
\label{sec:inactiveprojs}

\begin{figure}[t]
  \centering
  \includegraphics[width=0.5\textwidth]{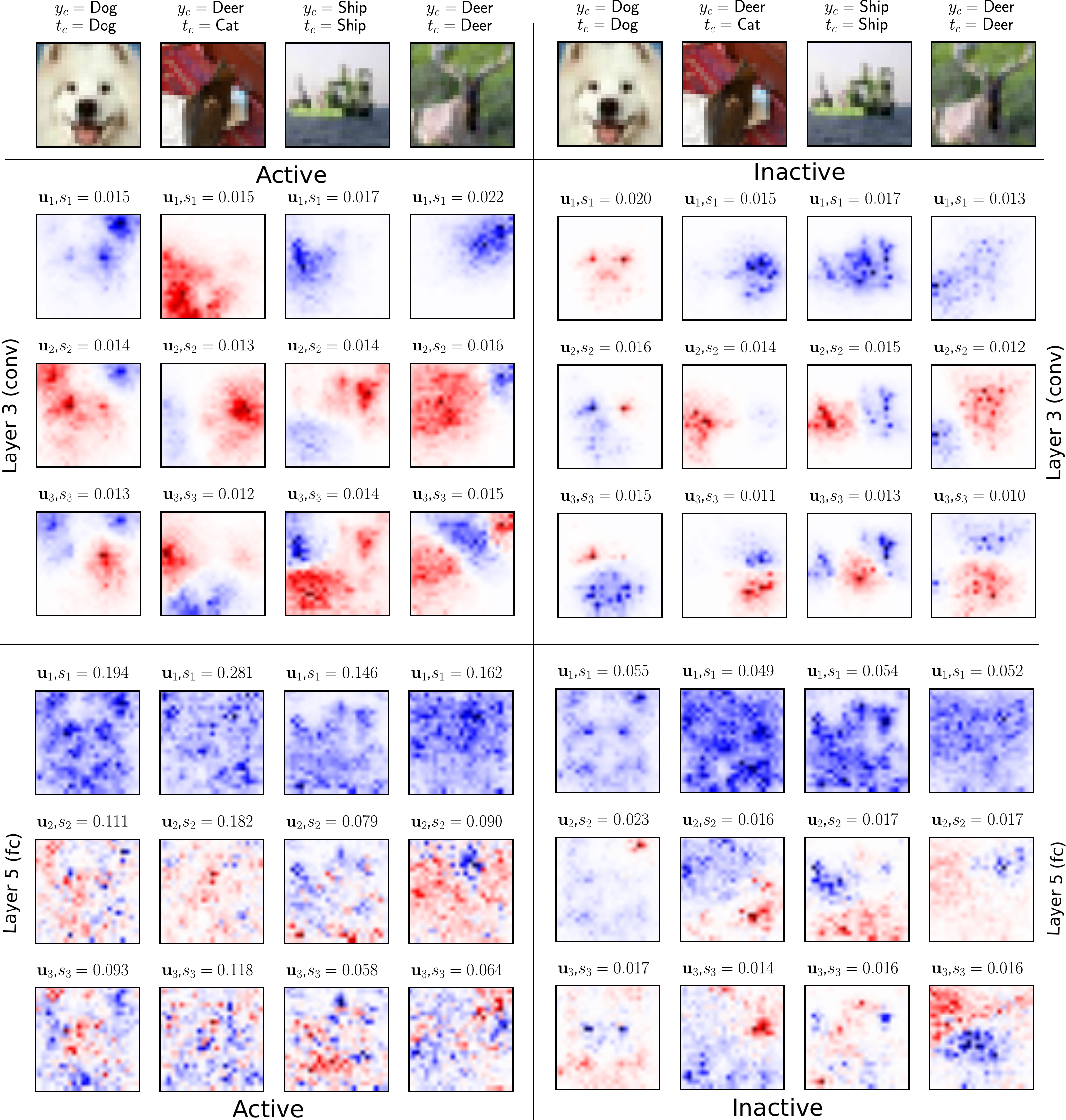}  
  \caption{Singular pattern analysis on the third hidden (convolutional/conv) and fifth hidden (fully connected/fc) layers of the 2CONV architecture trained on CIFAR10 images; each column shows the heatmap visualisations for patterns from the same input image; the four columns on the left show analysis on the active neurons of the corresponding layer, the four columns on the right show analysis on the inactive neurons; the first top row show the input images, following groups of three rows show the three most significant patterns in layer 3 and 5 respectively.} 
  \label{fig:inactive}
\end{figure}

In terms of visualisations, we have been concentrating on the activity of the last layer.  But, as already demonstrated in the previous section, SPA can be used to disentangle the distributed representation in any hidden layer.  In ReLU the decomposition can go even further, as we can do SPA separately on the active and inactive neurons of a hidden layer.  Active neurons are the ones that feed the following layer, hence their representation is of vital importance.  The active network determines the entire computation of the network, yet its makeup is dependent on many neurons being inactive.  We noticed in our experiments, that when operating on Imagenet input, on average 48\% of VGG16 neurons were inactive\footnote{We treat the convolution operations as separate neurons with the same weights connected to different inputs.}.  

When evaluating the test performance of the 2CONV architecture, on average 83\% and 81\% of neurons were inactive after training on the MNIST and CIFAR10 datasets respectively.   The fact that only a subset of neurons are active in a given computation is not a quirk of one specific network, as observed by \citet{Hanin.etal:2019}.    In essence, the particular pattern of activity and inactivity of a ReLU network corresponds to its state that has a bearing on the computation.  Something in the input must ``turn off" the neurons that end up being inactive.  We hypothesise that it is equally meaningful to do SPA on the inactive as it is on the active set of neurons in a neural network.  

Figure \ref{fig:inactive} shows visualisation of the most significant patterns from the SPA of the active and inactive parts of two hidden layers of the 2CONV neural network trained on the CIFAR10 dataset.  Note how the patterns given by the inactive side show fine details of sensitivity to features and regions of the images.

\section{Sanity checks}

\begin{figure}[t]
  \centering
  \includegraphics[width=0.42\textwidth]{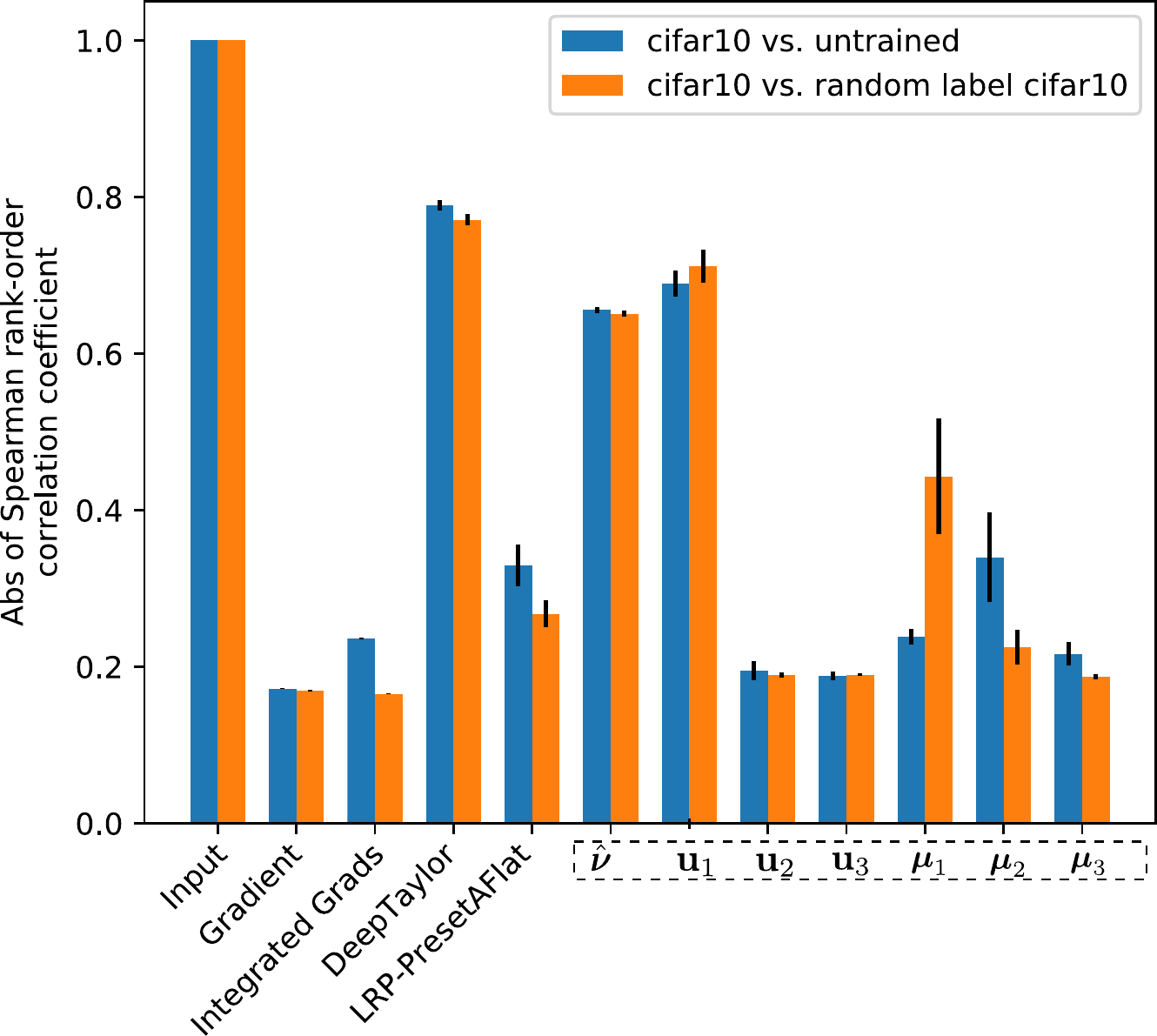}  
  \caption{Mean spearman rank-order correlation between visualisations derived from a random sample of 1000 CIFAR-10 images for different methods between trained and randomly initialised 2CONV network (blue), and between trained and random label CIFAR-10 trained 2CONV (orange); dashed box groups the visualisations obtained from the methods proposed in this paper derived from the activity of the output layer of the network  -- $\sw{\bm{\nu}}$ stands for input component decomposition, $\mathbf{u}_i$ to the $i^{\text{\tiny th}}$ singular pattern and $\bm{\mu}_i$ the $i^{\text{\tiny th}}$ pattern $\text{sign}(\alpha_{mj})$ ranked according to $\alpha_{mj}$.}
  \label{fig:saliency}
\end{figure}

To provide an objective measure of the quality of SLP and SPA based visualisations, we perform sanity and saliency checks as prescribed by \citet{adebayo.etal:2018} on existing and the proposed visualisation methods.    In these tests we measure correlation between interpretability visualisations over the same input for different networks; in the first instance a 2CONV network trained on CIFAR10 against an untrained (randomly initialised) network, and in the second instance against 2CONV trained on randomly labelled CIFAR10.  High correlation between visualisations suggests that an interpretability method is model agnostic, showing similar visualisations regardless of what the network has been trained to detect, or if it was trained at all.  In Figure \ref{fig:saliency} we show average Spearman rank-order correlation coefficients between visualisations generated from 1000 randomly chosen images from the CIFAR10 dataset.  For reference, the correlation between input images is included.  Visualisations proposed in this paper were derived from the output layer of the network and are labeled as $\sw{\bm{\nu}}$ for ICD, $\mathbf{u}_i$ for top $i^{\text{\tiny th}}$ SPA vector according to broad significance and $\bm{\mu}_i$ for top $i^{{\text{\tiny th}}}$ according to narrow significance.     

What is most striking is how badly DeepTaylor visualisation fairs in the sanity checks -- its visualisations are highly correlated regardless of the underlying model.  It is not at all surprising that ICD and the top patterns of SPA, for both broad and narrow significance orderings, are highly correlated between the models.  They relate to the mean of the ICD matrix and this seems to be an echo of the input image.  However, the next two SPA vectors in each ordering show correlations almost as low as gradient-based visualisations, with only $\bm{\mu}_2$ doing a bit worse and on par with LRP.

\section{Conclusion}

The switched linear projection is an interpretation of the computation of activity of a neuron in a network.  It forms the basis of singular pattern analysis which disambiguates the distributed nature of the internal representation inside a neural network.  Activity in a single layer can be decomposed into a set of orthogonal pattern vectors and their corresponding mixing coefficients for each neuron.  Visualisations based on these vectors convey and rank the patterns and the relationships between components of the input in order of their importance for the computation of activity inside and at the output of the network.  It reveals the decision making process inside the network.

SPA delivers the means of gauging the inherent dimensionality of the individual network layer's distributed representation.  This dimensionality can be used to measure the usage of the representational capacity of the network.     

Switch linear projections highlight the difference between the active and inactive components of ReLU neural networks.  Interpretability methods that track the relevance from the output back to the input inadvertently miss the information about the inactive aspect of the network.  We hypothesise that this information is important, since inactive neurons control the non-linear properties of the ReLU network.  SPA can separate and indicate patterns of importance for active and inactive neurons in a hidden layer.  

Since switched linear projections and significant pattern analysis are just an interpretation of the computation inside a neural network, they may also become useful tools for training of deep networks.  For instance, it might be possible to develop new regularisation methods based on switched weights, biases and centres of the neurons in the network, or the inherent dimension of the SPA vectors.  It remains to be investigated how the nature of the inactive subnetwork, and potential ways of manipulating it during training, would affect generalisation. 

\appendix

\section{Proof of Theorem \ref{thrm:switchedprojs}}
\label{apx: proof}


\begin{proof}
By definition from Equation \ref{eqn:activity}, the activity of neuron $i$ in layer $l$ is
\begin{equation}\label{eqn:recursive}
v_{li}(\mathbf{x}) = \sum_{j=1}^{U_l}\sigma_r\Big (v_{l-1j}(\mathbf{x})\Big )w_{lij}+b_{li},
\end{equation}
where $w_{lij}$ is the weight on the connection between neuron $j$ of layer $l-1$ and neuron $i$ of layer $l$, and $b_{li}$ is the bias of neuron $i$ in layer $l$.  

Since
\begin{align*}
  \sigma_r(v)&=\begin{cases}v & v>0\\ 0 & \mbox{otherwise,}\end{cases} \text{ and}\\
  \dot{\sigma}_r(v)&=\frac{d\sigma_r(v)}{dv}=\begin{cases}1 & v>0\\ 0 & \mbox{otherwise},\end{cases}
\end{align*}
we have 
\begin{equation*}
\sigma_r\Big (v_{l-1j}(\mathbf{x})\Big )w_{lij}=\begin{cases}v_{l-1j}(\mathbf{x})w_{lij} & v>0\\ 0 & \mbox{otherwise},\end{cases}
\end{equation*}
and thus
\begin{equation*}
\sigma_r\Big (v_{l-1k}(\mathbf{x})\Big )w_{lik}=v_{l-1k}(\mathbf{x})\dot{\sigma}_r\Big (v_{l-1k}(\mathbf{x})\Big )w_{lik}.
\end{equation*}

As a result
\begin{align}\label{eqn:activeactivity}
  v_{li}(\mathbf{x}) &= \sum_{j=1}^{U_l}v_{l-1j}\dot{\sigma}_r\Big (v_{l-1j}(\mathbf{x})\Big )w_{lij}+b_{li} \nonumber \\
  &=
\sum_{j\in \mathcal{A}}v_{l-1j}w_{lij}+b_{li}
\end{align}

where $\mathcal{A}$ is the set of neurons with activity $v>0$, the active neurons.  Substituting the expression for activity from Equation \ref{eqn:activeactivity} into its recursive definition in Equation \ref{eqn:recursive}, where $v_{0i}(\mathbf{x})=x_i$, reveals that the overall computation is a series of linear transformations of $\mathbf{x}$ equivalent to a single linear transformation 

\begin{equation*}
v_{li}(\mathbf{x}) =\mathbf{x}\swx{T}{li}+\sbx{li}. 
\end{equation*}
where
\noindent
\begin{align*}
\swx{T}{li} = &\mathbf{W}_1^{(\mathbf{x})}\mathbf{W}_2^{(\mathbf{x})}\hdots \mathbf{W}_{l-1}^{(\mathbf{x})}\mathbf{w}^T_{li} \text{ and}\\
  \sbx{li} = &\mathbf{b}_1^{(\mathbf{x})}\mathbf{W}_2^{(\mathbf{x})}\hdots\mathbf{W}_{l-1}^{(\mathbf{x})}\mathbf{w}^T_{li}\\
  &+\mathbf{b}_2^{(\mathbf{x})}\mathbf{W}_3^{(\mathbf{x})}\hdots\mathbf{W}_{l-1}^{(\mathbf{x})}\mathbf{w}^T_{li}\\
  &+\hdots+\mathbf{b}_{l-1}\mathbf{w}^T_{li}+b_{li}.
\end{align*}
\end{proof}

\bibliography{rswitch}
\bibliographystyle{icml2020}


%



\end{document}